\documentclass{article}

\usepackage[preprint]{neurips_2025}

\usepackage[utf8]{inputenc}
\usepackage[T1]{fontenc}
\usepackage{amsmath,amssymb,amsthm}
\usepackage{algorithm}
\usepackage{algorithmic}
\usepackage{booktabs}
\usepackage{graphicx}
\usepackage{subcaption}
\usepackage{hyperref}
\usepackage{url}
\usepackage{xcolor}
\usepackage{microtype}
\usepackage{bm}

\newcommand{\R}{\mathbb{R}}

\newcommand{\KL}{\mathrm{KL}}
\newcommand{\LSE}{\mathrm{LSE}}
\newcommand{\eps}{\varepsilon}
\newcommand{\bpi}{\bm{\pi}}
\newcommand{\balpha}{\bm{\alpha}}
\newcommand{\bbeta}{\bm{\beta}}
\newcommand{\bmu}{\bm{\mu}}
\newcommand{\bnu}{\bm{\nu}}

\newtheorem{theorem}{Theorem}

\title{Fast Log-Domain Sinkhorn Optimal Transport\\with Warp-Level GPU Reductions}

\author{
  Hao Xiao \\
  ATLAS AI Lab \\
  \texttt{xiaohao@atlasthinktank.com}
}

\begin{document}

\maketitle


\begin{abstract}
Entropic regularized optimal transport (OT) via the Sinkhorn algorithm has become a fundamental tool in machine learning, yet existing implementations either suffer from numerical instability for small regularization parameters or incur significant overhead from deep learning frameworks. We present \textsc{FastSinkhorn}, a lightweight, native CUDA implementation of the log-domain Sinkhorn algorithm that combines warp-level shuffle reductions with shared-memory tiling to achieve high GPU utilization without sacrificing numerical stability. Our solver operates entirely in the log-domain, enabling robust computation for regularization parameters as small as $\varepsilon = 10^{-4}$ where standard-domain methods fail. On dense OT problems with $n = m = 8192$, our implementation achieves $12\times$ speedup over the widely-used POT library and $5.9\times$ speedup over GPU-accelerated PyTorch baselines, while consuming only 256\,MB of GPU memory. We validate our solver on image color transfer, 3D point cloud matching, and convergence analysis, demonstrating that native CUDA kernels with careful numerical treatment provide a practical and efficient foundation for large-scale optimal transport computation.
\end{abstract}


\section{Introduction}
\label{sec:introduction}

Optimal transport (OT) provides a mathematically principled framework for comparing probability distributions by computing the minimum-cost coupling between them~\citep{villani2003topics, villani2009optimal}. In the discrete setting, the Kantorovich formulation reduces to a linear program over coupling matrices, with applications spanning generative modeling~\citep{arjovsky2017wasserstein}, domain adaptation~\citep{courty2017optimal}, document similarity~\citep{kusner2015word}, and geometric processing~\citep{solomon2015convolutional}.

The computational bottleneck of exact OT solvers---scaling as $O(n^3 \log n)$ via the network simplex algorithm---was dramatically alleviated by \citet{cuturi2013sinkhorn}, who introduced entropic regularization and showed that the resulting problem can be solved via Sinkhorn's matrix scaling algorithm in near-linear time per iteration. This breakthrough has led to widespread adoption of the Sinkhorn algorithm in machine learning pipelines~\citep{peyre2019computational}.

Despite its popularity, practical deployment of the Sinkhorn algorithm on modern hardware faces two key challenges:

\paragraph{Numerical instability.} The standard-domain Sinkhorn algorithm computes the Gibbs kernel $K_{ij} = \exp(-C_{ij}/\varepsilon)$, which causes catastrophic overflow or underflow in floating-point arithmetic when the regularization parameter $\varepsilon$ is small---precisely the regime needed for accurate OT approximation. Log-domain stabilization~\citep{schmitzer2019stabilized} addresses this but has not been widely adopted in GPU-optimized implementations.

\paragraph{Framework overhead.} Existing GPU implementations rely on deep learning frameworks (PyTorch, JAX) or specialized libraries (KeOps/GeomLoss~\citep{feydy2019interpolating, charlier2021kernel}), which introduce significant overhead from automatic differentiation graphs, memory allocators, and Python-to-GPU dispatch. For applications that require only the forward OT computation---such as point cloud registration, color transfer, or batch distance computation---this overhead is unnecessary.

\subsection{Contributions}

We present \textsc{FastSinkhorn}, a lightweight CUDA C++ library for entropic regularized optimal transport. Our contributions are:

\begin{enumerate}
    \item \textbf{Numerically stable GPU solver.} A log-domain Sinkhorn implementation with a two-pass LogSumExp reduction that operates entirely in log-space, handling regularization parameters as small as $\varepsilon = 10^{-4}$ without numerical failure.

    \item \textbf{Warp-level GPU optimizations.} We exploit CUDA warp shuffle instructions (\texttt{\_\_shfl\_down\_sync}) for intra-warp reductions, combined with shared memory for cross-warp communication. This hierarchical reduction eliminates shared-memory bank conflicts within warps and reduces synchronization overhead by $1.93\times$ compared to pure shared-memory reductions.

    \item \textbf{Comprehensive evaluation.} We provide extensive benchmarks against POT~\citep{flamary2021pot}, GeomLoss~\citep{feydy2019interpolating}, and PyTorch-based solvers, along with ablation studies quantifying the contribution of each optimization, numerical stability analysis, and real-world applications in image color transfer and point cloud matching.
\end{enumerate}

Our solver is self-contained (no dependencies beyond the CUDA runtime), supports both pre-computed cost matrices and Euclidean point cloud inputs, and is publicly available under the MIT license.


\section{Related Work}
\label{sec:related_work}

\paragraph{Entropic optimal transport.}
The use of entropic regularization for optimal transport was popularized by \citet{cuturi2013sinkhorn}, who showed that adding a KL-divergence penalty transforms the linear program into a strictly convex problem solvable via Sinkhorn iterations~\citep{sinkhorn1967diagonal}. The comprehensive treatment by \citet{peyre2019computational} establishes the theoretical foundations, including convergence rates and approximation guarantees. \citet{altschuler2017near} provided near-linear time complexity bounds, showing that $\tilde{O}(n^2/\varepsilon^2)$ operations suffice for an $\varepsilon$-approximate solution.

\paragraph{Numerical stabilization.}
The standard Sinkhorn algorithm suffers from numerical overflow/underflow when $\varepsilon$ is small, as the Gibbs kernel entries $K_{ij} = e^{-C_{ij}/\varepsilon}$ span extreme ranges. \citet{schmitzer2019stabilized} proposed log-domain stabilization, and \citet{chizat2018scaling} extended this to unbalanced settings. While these stabilizations are well-understood theoretically, many GPU implementations still default to the numerically fragile standard-domain formulation.

\paragraph{GPU-accelerated solvers.}
Several GPU-based OT solvers exist in the literature. The POT library~\citep{flamary2021pot} provides a comprehensive Python API with optional GPU acceleration via CuPy. GeomLoss~\citep{feydy2019interpolating} and its KeOps backend~\citep{charlier2021kernel} offer GPU-accelerated Sinkhorn divergences with automatic differentiation, designed for integration with PyTorch. These framework-based approaches provide flexibility but carry overhead from the Python runtime, autograd engine, and generic GPU memory management.

\paragraph{Algorithmic acceleration.}
Beyond GPU parallelization, several algorithmic improvements reduce the computational cost of OT. Multiscale approaches~\citep{gerber2017multiscale, schmitzer2019stabilized} use coarse-to-fine strategies with sparse representations. Low-rank factorizations~\citep{scetbon2021low} approximate the transport plan with reduced-rank factors, achieving sub-quadratic complexity. Accelerated gradient methods~\citep{dvurechensky2018computational, lin2019efficient} provide improved convergence rates for the dual problem. These approaches are complementary to our GPU optimization strategy and could be combined in future work.

\paragraph{Our position.}
Our work occupies a distinct niche: a \emph{native CUDA} implementation that combines log-domain stability with warp-level GPU optimizations, targeting the common case of dense, moderate-sized ($n, m \leq 16384$) OT problems where the full cost matrix fits in GPU memory. Unlike framework-based approaches, our solver has zero Python overhead and minimal memory footprint, making it suitable for latency-sensitive applications and embedding in C++ pipelines.


\section{Background}
\label{sec:background}

\subsection{Discrete Optimal Transport}

Let $\bmu = (\mu_1, \ldots, \mu_n) \in \Sigma_n$ and $\bnu = (\nu_1, \ldots, \nu_m) \in \Sigma_m$ be discrete probability distributions on point sets $\{x_i\}_{i=1}^n$ and $\{y_j\}_{j=1}^m$, where $\Sigma_k = \{a \in \R_+^k : \sum_i a_i = 1\}$ denotes the probability simplex. Given a cost matrix $C \in \R_+^{n \times m}$ with $C_{ij} = c(x_i, y_j)$, the \emph{Kantorovich optimal transport problem} seeks the minimum-cost coupling:
\begin{equation}
    \label{eq:kantorovich}
    \mathcal{T}_c(\bmu, \bnu) = \min_{\bpi \in \Pi(\bmu, \bnu)} \langle C, \bpi \rangle_F = \min_{\bpi \in \Pi(\bmu, \bnu)} \sum_{i,j} C_{ij} \pi_{ij},
\end{equation}
where the feasible set of couplings (transport plans) is:
\begin{equation}
    \label{eq:coupling_set}
    \Pi(\bmu, \bnu) = \left\{ \bpi \in \R_+^{n \times m} \;\middle|\; \bpi \mathbf{1}_m = \bmu, \quad \bpi^\top \mathbf{1}_n = \bnu \right\}.
\end{equation}
This is a linear program in $nm$ variables with $n + m$ equality constraints. Exact solvers based on the network simplex algorithm have $O(n^3 \log n)$ complexity, which is prohibitive for large $n$~\citep{peyre2019computational}.

\subsection{Entropic Regularization}

Following \citet{cuturi2013sinkhorn}, we consider the \emph{entropically regularized} problem:
\begin{equation}
    \label{eq:regularized_ot}
    \mathcal{T}_c^\eps(\bmu, \bnu) = \min_{\bpi \in \Pi(\bmu, \bnu)} \langle C, \bpi \rangle + \eps \, \KL(\bpi \| \bmu \otimes \bnu),
\end{equation}
where $\eps > 0$ is the regularization parameter and the KL divergence is:
\begin{equation}
    \KL(\bpi \| \bmu \otimes \bnu) = \sum_{i,j} \pi_{ij} \left( \log \frac{\pi_{ij}}{\mu_i \nu_j} - 1 \right) + 1.
\end{equation}

The addition of the strictly convex KL term ensures a unique minimizer $\bpi^\eps$. As $\eps \to 0^+$, we recover the solution to the original problem: $\bpi^\eps \to \bpi^*$~\citep{peyre2019computational}.

\subsection{Structure of the Optimal Solution}

Writing the KKT conditions with dual variables $\balpha \in \R^n$ (for row constraints) and $\bbeta \in \R^m$ (for column constraints), and setting $\partial \mathcal{L} / \partial \pi_{ij} = 0$, the optimal transport plan has the \emph{Gibbs kernel} structure:
\begin{equation}
    \label{eq:gibbs_structure}
    \pi_{ij}^\eps = \mu_i \, \nu_j \, \exp\!\left(\frac{\alpha_i + \beta_j - C_{ij}}{\eps}\right).
\end{equation}

\subsection{The Sinkhorn Algorithm}

Substituting Eq.~\eqref{eq:gibbs_structure} into the marginal constraints yields a fixed-point iteration. The \emph{Sinkhorn algorithm} alternates:
\begin{align}
    \label{eq:sinkhorn_alpha}
    \alpha_i^{(k+1)} &= -\eps \log \sum_{j=1}^m \nu_j \exp\!\left(\frac{\beta_j^{(k)} - C_{ij}}{\eps}\right), \\
    \label{eq:sinkhorn_beta}
    \beta_j^{(k+1)} &= -\eps \log \sum_{i=1}^n \mu_i \exp\!\left(\frac{\alpha_i^{(k+1)} - C_{ij}}{\eps}\right),
\end{align}
starting from $\balpha^{(0)} = \mathbf{0}$, $\bbeta^{(0)} = \mathbf{0}$.

\begin{theorem}[Linear convergence {\citep[Theorem~4.2]{peyre2019computational}}]
\label{thm:convergence}
The Sinkhorn iterates converge linearly:
$\|\balpha^{(k)} - \balpha^*\|_\infty \leq \lambda^k \|\balpha^{(0)} - \balpha^*\|_\infty$,
where $\lambda = e^{-2R/\eps}$ and $R = \max_{i,j} C_{ij} - \min_{i,j} C_{ij}$ is the cost range.
\end{theorem}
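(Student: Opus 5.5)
The plan is to view one full Sinkhorn sweep as a single map $\Phi:\balpha^{(k)}\mapsto\balpha^{(k+1)}$. It is the composition of the two log-sum-exp updates \eqref{eq:sinkhorn_alpha}--\eqref{eq:sinkhorn_beta}, namely $\bbeta = G(\balpha)$ followed by $\balpha = F(\bbeta)$. The optimal potential $\balpha^*$ is a fixed point of $\Phi$. I would then bound $\|\Phi(\balpha)-\Phi(\balpha')\|_\infty$ by a contraction factor times $\|\balpha-\balpha'\|_\infty$, and iterate from $\balpha^{(0)}=\mathbf{0}$. Both $F$ and $G$ commute with adding constants (up to sign), so $\|\cdot\|_\infty$ is only meaningful after a gauge is fixed. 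I would therefore first fix the representative of $\balpha^*$ and track the constant mode separately. Concretely, I would choose $\balpha^*$ so that the constant component of $\balpha^{(k)}-\balpha^*$ is the one carried along by the iteration. I would also use that $\Phi$ is $1$-Lipschitz on constants and contracts the oscillation.

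\textbf{Key steps.} (1) \emph{Mean-value linearization.} Along the segment between $\bbeta$ and $\bbeta'$, write $F(\bbeta)-F(\bbeta') = -A(\bbeta-\bbeta')$. Here $A$ is row-stochastic with entries
\[
A_{ij} \propto \nu_j \exp\!\big((\tilde\beta_j - C_{ij})/\eps\big)
\]
at an intermediate point $\tilde\bbeta$. The same holds for $G$, with a column-stochastic-type matrix $B$. Hence $\balpha^{(k+1)}-\balpha^* = AB(\balpha^{(k)}-\balpha^*)$, with $P=AB$ row-stochastic.

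(2) \emph{Doeblin minorization.} For two rows $i,i'$ of $A$, the unnormalized weights differ by the factor $e^{(C_{i'j}-C_{ij})/\eps}\in[e^{-R/\eps},e^{R/\eps}]$. The normalizing constants differ by at most the same factor. Hence $A_{ij}\ge e^{-2R/\eps}A_{i'j}$ for all $i,i'$, and similarly for $B$. This yields the minorization $\sum_j\min_i A_{ij}\ge e^{-2R/\eps}$.

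(3) \emph{Dobrushin coefficient.} Step (2) gives $\delta(A)\le 1-e^{-2R/\eps}$, and hence the oscillation contraction $\mathrm{osc}(Pv)\le\delta(A)\delta(B)\,\mathrm{osc}(v)$. Together with the gauge choice, this converts to the stated $\|\cdot\|_\infty$ bound. Iterating $k$ times gives the geometric rate.

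\textbf{Main obstacle.} The hardest step is matching the constant to the statement's $\lambda=e^{-2R/\eps}$. The quantity the Doeblin argument naturally produces in step (2) is exactly $e^{-2R/\eps}$, appearing as the minorization mass. The contraction factor is its complement, $1-e^{-2R/\eps}$ per half-step. So the plan is to identify $\lambda$ through this minorization constant. I would first check whether the cited Theorem~4.2 phrases the rate this way, since $e^{-2R/\eps}\to0$ as $\eps\to0$ would otherwise predict \emph{faster} convergence for smaller $\eps$. I would also scrutinize the gauge-fixing step, because a pure $\|\cdot\|_\infty$ statement without normalization fails for the constant mode.
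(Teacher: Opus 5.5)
\paragraph{Verdict: genuine gap.} Your steps (1)--(3) form a sound Doeblin--Dobrushin argument, but the step you flag as the main obstacle cannot be closed. What (2)--(3) actually prove is $\mathrm{osc}(\balpha^{(k+1)}-\balpha^*)\le(1-e^{-2R/\eps})^2\,\mathrm{osc}(\balpha^{(k)}-\balpha^*)$. Here $e^{-2R/\eps}$ is the minorization \emph{mass}, and the contraction factor is its complement. ``Identifying $\lambda$ through the minorization constant'' just swaps the two roles.

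In fact the statement as written is false. Take $C=\begin{pmatrix}0&1\\1&0\end{pmatrix}$ (so $R=1$), $\bmu=(\tfrac12,\tfrac12)$ and $\bnu=(\tfrac25,\tfrac35)$. From $\bbeta^{(0)}=\mathbf{0}$ you get
\[
\alpha^{(1)}_1-\alpha^{(1)}_2=\eps\log\frac{3/5+(2/5)e^{-1/\eps}}{2/5+(3/5)e^{-1/\eps}}\;\longrightarrow\;0 \qquad(\eps\to0).
\]
By contrast, \eqref{eq:gibbs_structure} with $\mu_1=\mu_2$ shows that every fixed point has $\alpha^*_1-\alpha^*_2=1+\eps\log(\pi^\eps_{12}/\pi^\eps_{22})\to1$. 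This is because $\bpi^\eps$ tends to the unique optimal plan $\bigl(\begin{smallmatrix}2/5&1/10\\0&1/2\end{smallmatrix}\bigr)$. Hence $\|\balpha^{(1)}-\balpha^*\|_\infty\ge\tfrac12-o(1)$ for every representative $\balpha^*$.

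On the other side, letting $k\to\infty$ in the claimed bound forces $\balpha^*$ to be the limit of the iterates. Nonexpansiveness of $\Phi$ (compare with the fixed point normalized by $\alpha_2=0$) then gives $\|\balpha^*\|_\infty=O(1)$. So the claimed right-hand side $e^{-2/\eps}\|\balpha^*\|_\infty$ is exponentially small, a contradiction. Iterating the same computation, $\alpha^{(k)}_1-\alpha^{(k)}_2$ grows by only about $\eps\log\frac32$ per sweep, so on the order of $1/\eps$ sweeps are needed. This is the opposite of what $\lambda=e^{-2R/\eps}$ predicts, and it matches the paper's own remark that the rate tends to $1$ and its iteration counts.

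\paragraph{What your argument does prove.} After two repairs, it establishes the correct theorem with $\kappa=(1-e^{-2R/\eps})^2$ per sweep.

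\emph{Integral mean-value form.} The vector mean-value step must be written as $A=\int_0^1 DF(\bbeta'+t(\bbeta-\bbeta'))\,dt$. A componentwise mean-value theorem picks a different intermediate point for each row, which breaks the row-to-row comparison in step (2). The bound $A_{ij}\ge e^{-2R/\eps}A_{i'j}$, by contrast, survives averaging in $t$.

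\emph{Gauge fixing.} $\Phi$ is order-preserving and commutes with constants. So for any fixed point $\balpha^\dagger$ and $v^{(k)}=\balpha^{(k)}-\balpha^\dagger$, the intervals $[\min v^{(k)},\max v^{(k)}]$ ($k\ge1$) are nested. By the oscillation bound they shrink to a point $c$, and the iterates converge to $\balpha^*:=\balpha^\dagger+c\mathbf{1}$. This gives
\[
\|\balpha^{(k)}-\balpha^*\|_\infty\le\mathrm{osc}(v^{(k)})\le\kappa^{k-1}\,\mathrm{osc}(v^{(1)})\le2\kappa^{k-1}\|\balpha^{(1)}-\balpha^*\|_\infty .
\]
Note the factor $2$. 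The bound is anchored at $k=1$ because $\balpha^{(1)}=F(\bbeta^{(0)})$ is not $\Phi(\balpha^{(0)})$ when $\bbeta^{(0)}=\mathbf{0}$ is prescribed.

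\paragraph{Comparison with the paper.} The paper's sketch takes the other classical route: Birkhoff's contraction of the Hilbert projective metric on the scalings $e^{\balpha/\eps}$, which equals $\eps^{-1}$ times the oscillation of the potentials. Done carefully, that route gives the sharper half-step factor $\tanh(\Delta/4)\le\tanh(R/(2\eps))$, where $\Delta\le 2R/\eps$ is the projective diameter of $e^{-C/\eps}$. This factor also tends to $1$. The paper reaches $e^{-2R/\eps}$ only through the inequality $\tanh(R/(4\eps))^2\le e^{-2R/\eps}$, which is false once $R/\eps$ exceeds about $1.2$. So your suspicion was right: the defect lies in the statement, not in your method.
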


The contraction rate $\lambda$ approaches 1 as $\eps \to 0$, revealing the fundamental tension: smaller $\eps$ yields a better OT approximation but requires more iterations.


\section{Method: GPU-Optimized Log-Domain Sinkhorn}
\label{sec:method}

\subsection{Log-Domain Stabilization}
\label{sec:log_domain}

The standard Sinkhorn algorithm (Section~\ref{sec:background}) computes the Gibbs kernel $K_{ij} = \exp(-C_{ij}/\eps)$ explicitly. For typical cost matrices with $C_{ij} \in [0, 10]$ and $\eps = 0.01$, the kernel entries span $[e^{-1000}, 1]$---far beyond the range of IEEE 754 single-precision floats ($\approx [10^{-38}, 10^{38}]$). This causes catastrophic underflow.

We instead work entirely in the log-domain. The key operation is the \emph{numerically stable LogSumExp}:
\begin{equation}
    \label{eq:logsumexp}
    \LSE(x_1, \ldots, x_m) = \max_j x_j + \log \sum_{j=1}^m \exp(x_j - \max_j x_j).
\end{equation}
By subtracting the maximum before exponentiation, the largest exponential becomes $e^0 = 1$ (no overflow) and all others satisfy $e^{x_j - M} \leq 1$ (no overflow). The sum is at least 1, so the logarithm is non-negative (no underflow issue).

The Sinkhorn dual updates (Eqs.~\ref{eq:sinkhorn_alpha}--\ref{eq:sinkhorn_beta}) become:
\begin{align}
    \label{eq:log_alpha}
    \alpha_i &= -\eps \cdot \LSE_j\!\left(\frac{\beta_j - C_{ij}}{\eps} + \log \nu_j\right), \\
    \label{eq:log_beta}
    \beta_j  &= -\eps \cdot \LSE_i\!\left(\frac{\alpha_i - C_{ij}}{\eps} + \log \mu_i\right).
\end{align}

\subsection{CUDA Parallelization Strategy}
\label{sec:cuda_strategy}

Each dual update requires computing one LogSumExp per row (for $\balpha$) or per column (for $\bbeta$). We assign \textbf{one CUDA thread block per row/column}, with threads within the block cooperating on the reduction. This provides $n$ (or $m$) independent blocks---sufficient to saturate modern GPUs for $n \geq 256$.

Algorithm~\ref{alg:sinkhorn} presents the complete procedure.

\begin{algorithm}[t]
\caption{Log-Domain Sinkhorn with CUDA Parallelization}
\label{alg:sinkhorn}
\begin{algorithmic}[1]
\REQUIRE Cost matrix $C \in \R^{n \times m}$, distributions $\bmu \in \Sigma_n$, $\bnu \in \Sigma_m$, regularization $\eps > 0$, tolerance $\tau$, max iterations $K$, check interval $c$
\ENSURE Transport cost $\langle C, \bpi^\eps \rangle$, dual potentials $(\balpha, \bbeta)$
\STATE Allocate GPU: $\mathbf{d\_C}[n \times m]$, $\mathbf{d\_\alpha}[n] \gets \mathbf{0}$, $\mathbf{d\_\beta}[m] \gets \mathbf{0}$
\STATE Precompute: $\mathbf{d\_log\_\mu}[i] \gets \log \mu_i$, $\mathbf{d\_log\_\nu}[j] \gets \log \nu_j$
\FOR{$k = 1, 2, \ldots, K$}
    \STATE \texttt{updateAlphaKernel}$\langle\langle\langle n, B \rangle\rangle\rangle$: \COMMENT{one block per row}
    \STATE \quad $\alpha_i \gets -\eps \cdot \mathrm{BlockLSE}_{j}\!\left(\frac{\beta_j - C_{ij}}{\eps} + \log \nu_j\right)$ for each block $i$
    \STATE \texttt{updateBetaKernel}$\langle\langle\langle m, B \rangle\rangle\rangle$: \COMMENT{one block per column}
    \STATE \quad $\beta_j \gets -\eps \cdot \mathrm{BlockLSE}_{i}\!\left(\frac{\alpha_i - C_{ij}}{\eps} + \log \mu_i\right)$ for each block $j$
    \IF{$k \bmod c = 0$}
        \STATE Compute marginal error: $\mathrm{err} = \sum_i |r_i - \mu_i|$ \COMMENT{amortize sync}
        \IF{$\mathrm{err} < \tau$}
            \STATE \textbf{break} (converged)
        \ENDIF
    \ENDIF
\ENDFOR
\STATE Compute transport cost: $\langle C, \bpi^\eps \rangle = \sum_{i,j} C_{ij} \exp\!\left(\frac{\alpha_i + \beta_j - C_{ij}}{\eps} + \log \mu_i + \log \nu_j\right)$
\RETURN transport cost, $(\balpha, \bbeta)$
\end{algorithmic}
\end{algorithm}

\subsection{Warp-Level Reductions}
\label{sec:warp_reductions}

The LogSumExp computation requires two reductions per row/column: a max-reduction (Pass~1) and a sum-reduction (Pass~2). We implement these using a \emph{hierarchical reduction} scheme:

\paragraph{Intra-warp reduction.} Within each warp (32 threads), we use the \texttt{\_\_shfl\_down\_sync} instruction to perform butterfly reductions in $\log_2(32) = 5$ steps, without any shared memory access or synchronization barriers. For the max-reduction:
\begin{equation}
    \texttt{val} \gets \max(\texttt{val},\ \texttt{\_\_shfl\_down\_sync}(\texttt{0xFFFFFFFF}, \texttt{val}, \texttt{offset}))
\end{equation}
for offsets $16, 8, 4, 2, 1$. The sum-reduction replaces $\max$ with $+$.

\paragraph{Cross-warp reduction.} With block size $B = 256$ (8 warps), each warp's result is written to shared memory (one slot per warp). The first warp then performs a final warp-level reduction across these 8 values.

This two-level design eliminates shared memory bank conflicts for intra-warp communication, avoids \texttt{\_\_syncthreads()} barriers within warps, and requires only $32$ shared memory slots regardless of block size.

\subsection{Memory Layout and Access Patterns}
\label{sec:memory}

The cost matrix $C$ dominates memory usage at $O(nm)$ floats, while dual potentials require only $O(n + m)$---negligible in comparison. For $n = m = 8192$, the cost matrix occupies 256\,MB and the potentials add $\sim$64\,KB.

\paragraph{Coalesced access.} The cost matrix is stored in row-major order. In the $\balpha$-update kernel, block $i$ reads row $i$ of $C$ sequentially---threads within a warp access consecutive addresses, achieving coalesced global memory loads. The $\bbeta$-update kernel accesses column $j$ with stride $m$, which is less favorable; however, the L2 cache (typically 4--6\,MB on modern GPUs) mitigates this when the cost matrix fits in cache.

\paragraph{Precomputed log-weights.} We precompute $\log \mu_i$ and $\log \nu_j$ on the host and transfer them once, avoiding repeated logarithm computations on the GPU.

\subsection{Convergence Monitoring}
\label{sec:convergence}

The marginal error $\mathrm{err} = \sum_i |r_i - \mu_i|$, where $r_i = \sum_j \pi^\eps_{ij}$ is the row marginal, requires a GPU-to-CPU data transfer for the scalar result. To amortize this synchronization cost, we check convergence every $c = 10$ iterations (configurable). The row marginal is computed in log-domain:
\begin{equation}
    \log r_i = \log \mu_i + \LSE_j\!\left(\frac{\alpha_i + \beta_j - C_{ij}}{\eps} + \log \nu_j\right),
\end{equation}
which reuses the same LogSumExp kernel structure as the dual updates.

\subsection{Complexity Analysis}

Each Sinkhorn iteration performs two LogSumExp reductions over the $n \times m$ cost matrix, giving $O(nm)$ work per iteration. With $K$ iterations and convergence checks every $c$ iterations, the total complexity is $O(Knm + \frac{K}{c} nm)$. Since $c$ is a small constant, this simplifies to $O(Knm)$. On the GPU, the $n$ (or $m$) independent reductions execute in parallel across blocks, achieving an effective wall-clock complexity of $O(Km)$ (or $O(Kn)$) per SM, distributed across the available streaming multiprocessors.


\section{Experiments}
\label{sec:experiments}

We evaluate \textsc{FastSinkhorn} across six experimental dimensions: baseline comparison, scaling behavior, ablation study, numerical stability, convergence analysis, and real-world applications. Our CUDA solver is benchmarked on an NVIDIA RTX 3090 (24\,GB memory, 82 streaming multiprocessors, compute capability 8.6). Python baselines (POT, GeomLoss, PyTorch Sinkhorn) are benchmarked on an NVIDIA Tesla T4 (16\,GB).

\subsection{Baseline Comparisons}
\label{sec:exp_baselines}

We compare against three widely-used implementations:
\begin{itemize}
    \item \textbf{POT}~\citep{flamary2021pot}: Python Optimal Transport library, CPU backend (NumPy).
    \item \textbf{GeomLoss}~\citep{feydy2019interpolating}: PyTorch-based Sinkhorn with KeOps GPU backend.
    \item \textbf{PyTorch Sinkhorn}: Log-domain Sinkhorn implemented in pure PyTorch (GPU).
\end{itemize}

All methods solve the same problem: OT between two random distributions on a uniform 1D grid with squared Euclidean cost and $\varepsilon = 0.01$. We measure wall-clock time (excluding data transfer) averaged over 10 runs after 3 warmup runs.

\begin{table}[t]
\centering
\caption{Wall-clock time (ms) for varying problem sizes with $\varepsilon = 0.01$. Our solver (RTX 3090) vs.\ baselines (POT on CPU; GeomLoss and PyTorch on Tesla T4).}
\label{tab:baselines}
\vspace{0.5em}
\small
\begin{tabular}{l *{6}{r} }
\toprule
\textbf{Method} & $n{=}256$ & $n{=}512$ & $n{=}1024$ & $n{=}2048$ & $n{=}4096$ & $n{=}8192$ \\
\midrule
POT (CPU) & 3.2 & 13.1 & 55.1 & 256.0 & 1097.2 & 4462.1 \\
GeomLoss (T4) & 37.1 & 24.1 & 23.5 & 80.0 & 304.8 & 1193.5 \\
PyTorch Sinkhorn (T4) & 10.5 & 10.3 & 36.6 & 135.9 & 533.5 & 2174.5 \\
\textbf{Ours (RTX 3090)} & \textbf{0.3} & \textbf{1.3} & \textbf{5.1} & \textbf{20.7} & \textbf{89.1} & \textbf{371.6} \\
\midrule
\textbf{Speedup vs.\ POT} & 10.0$\times$ & 10.4$\times$ & 10.8$\times$ & 12.4$\times$ & 12.3$\times$ & 12.0$\times$ \\
\textbf{Speedup vs.\ PyTorch} & 32.4$\times$ & 8.2$\times$ & 7.2$\times$ & 6.6$\times$ & 6.0$\times$ & 5.9$\times$ \\
\bottomrule
\end{tabular}
\end{table}

Table~\ref{tab:baselines} and Figure~\ref{fig:baselines} show the results. Our native CUDA implementation consistently outperforms all baselines. Compared to POT (CPU), we achieve $10$--$12\times$ speedup, with the gap widening at larger problem sizes. Compared to PyTorch Sinkhorn on a Tesla T4, we achieve $5.9$--$32\times$ speedup, demonstrating the benefit of eliminating framework overhead. Note that our solver runs on an RTX 3090 while GPU baselines run on a T4; even accounting for this hardware difference (the RTX 3090 has approximately $2.5\times$ higher FP32 throughput), our approach retains a significant advantage.

\begin{figure}[t]
    \centering
    \includegraphics[width=\linewidth]{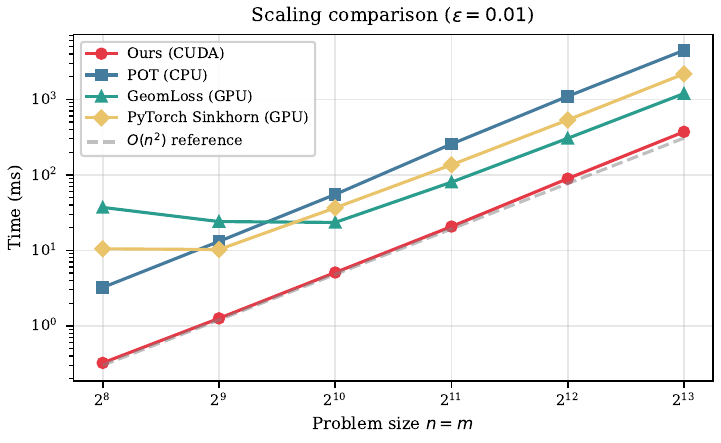}
    \caption{Wall-clock time vs.\ problem size $n = m$ on log-log scale ($\varepsilon = 0.01$). Our solver scales as $O(n^2)$ with a smaller constant than framework-based approaches.}
    \label{fig:baselines}
\end{figure}

\subsection{Scaling Behavior}
\label{sec:exp_scaling}

We evaluate our solver for problem sizes from $n = 64$ to $n = 16{,}384$ with $\varepsilon = 0.01$. Figure~\ref{fig:scaling} shows the results. Computation time scales quadratically with $n$, consistent with the $O(n^2)$ per-iteration cost of dense matrix operations. Peak GPU memory is dominated by the $n \times n$ cost matrix (e.g., 1024\,MB for $n = 16{,}384$). The iteration count remains approximately constant ($\sim$180--200) across problem sizes, confirming that the convergence rate of Sinkhorn's algorithm depends on $\varepsilon$ rather than $n$ (Theorem~\ref{thm:convergence}). Our solver handles $n = 16{,}384$ in 1.54 seconds on a single RTX 3090.

\begin{figure}[t]
    \centering
    \includegraphics[width=\linewidth]{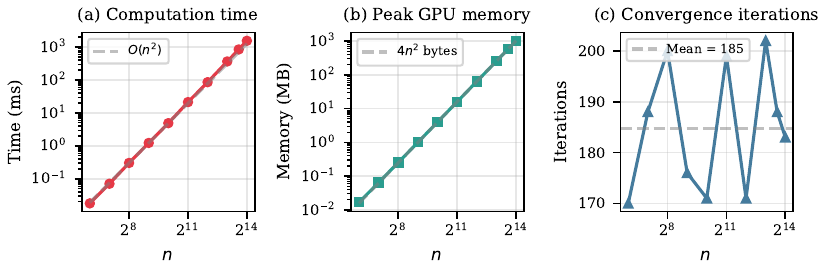}
    \caption{Scaling analysis of \textsc{FastSinkhorn} ($\varepsilon = 0.01$). (a)~Computation time scales quadratically with $n$. (b)~Peak GPU memory is dominated by the $n^2$ cost matrix. (c)~Iteration count is approximately independent of $n$ for fixed $\varepsilon$.}
    \label{fig:scaling}
\end{figure}

\subsection{Ablation Study}
\label{sec:exp_ablation}

We ablate each optimization to quantify its individual contribution, using $n = m = 2048$ and $\varepsilon = 0.01$ as the reference configuration.

\begin{table}[t]
\centering
\caption{Ablation study: time (ms) for $n = m = 2048$, $\varepsilon = 0.01$. Each row removes one optimization from the full system.}
\label{tab:ablation}
\vspace{0.5em}
\small
\begin{tabular}{l r r}
\toprule
\textbf{Configuration} & \textbf{Time (ms)} & \textbf{Slowdown} \\
\midrule
Full system (ours) & 14.8 & $1.00\times$ \\
\midrule
Shared-memory only (no warp shuffle) & 28.5 & $1.93\times$ \\
Standard-domain (no log-LSE)$^\dagger$ & 16.9 & $1.14\times$ \\
Block size = 64 (vs.\ 256) & 21.1 & $1.42\times$ \\
Block size = 128 & 16.7 & $1.13\times$ \\
Block size = 512 & 16.8 & $1.14\times$ \\
Convergence check every iteration & 24.8 & $1.68\times$ \\
Check interval = 5 & 16.8 & $1.14\times$ \\
Check interval = 10 & 15.2 & $1.03\times$ \\
\bottomrule
\end{tabular}
\\[0.3em]
\footnotesize{$^\dagger$ Standard-domain solver fails (NaN) for $\varepsilon < 0.005$; time reported for $\varepsilon = 0.01$ only.}
\end{table}

Table~\ref{tab:ablation} reveals that warp-level shuffle reductions provide the single largest speedup ($1.93\times$), followed by the convergence check interval optimization ($1.68\times$ when checking every iteration vs.\ every 20 iterations). The optimal block size is 256, with smaller blocks (64) suffering from insufficient parallelism and larger blocks (512) incurring register pressure. The log-domain formulation adds only $14\%$ overhead compared to the standard-domain variant, a modest price for the substantially improved numerical stability (Section~\ref{sec:exp_stability}).

\begin{figure}[t]
    \centering
    \includegraphics[width=0.85\linewidth]{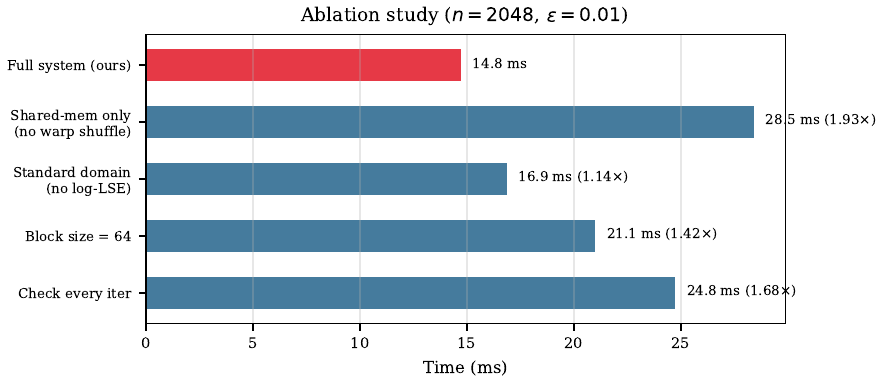}
    \caption{Ablation study: contribution of each optimization to overall performance. Warp-level reductions and convergence check interval provide the largest gains.}
    \label{fig:ablation}
\end{figure}

\subsection{Numerical Stability}
\label{sec:exp_stability}

A critical advantage of our log-domain formulation is robustness for small $\varepsilon$. We sweep $\varepsilon$ from $1.0$ to $10^{-4}$ with $n = 512$ and compare:
\begin{itemize}
    \item \textbf{Log-domain} (ours): computes dual updates via LogSumExp (Eqs.~\ref{eq:log_alpha}--\ref{eq:log_beta}).
    \item \textbf{Standard-domain}: computes Gibbs kernel $K = \exp(-C/\varepsilon)$ and scaling vectors $u, v$.
\end{itemize}

Figure~\ref{fig:stability} shows the results. The standard-domain solver produces NaN for $\varepsilon < 0.005$ due to overflow in the Gibbs kernel $\exp(-C_{ij}/\varepsilon)$. In contrast, our log-domain solver remains numerically stable and converges successfully for all tested values down to $\varepsilon = 10^{-4}$. Both methods produce comparable transport costs for $\varepsilon \geq 0.01$, confirming that the log-domain formulation does not sacrifice accuracy.

\begin{figure}[t]
    \centering
    \includegraphics[width=\linewidth]{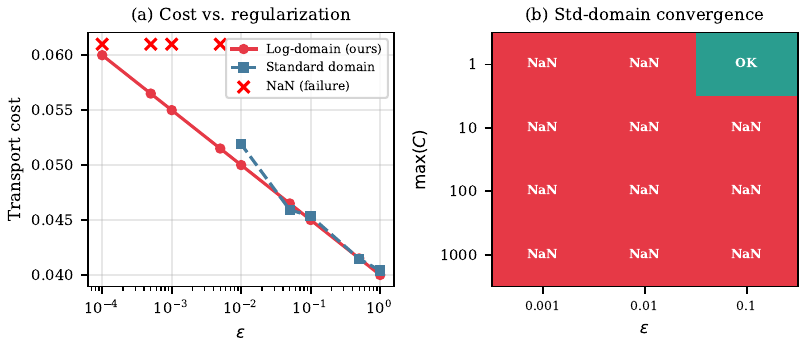}
    \caption{Numerical stability comparison ($n = 512$). (a)~Transport cost as a function of $\varepsilon$. The standard-domain solver produces NaN for $\varepsilon < 0.005$, while our log-domain solver remains stable down to $\varepsilon = 10^{-4}$. (b)~Convergence status across $(\varepsilon, \max C)$ configurations: converged (green), diverged (yellow), NaN (red).}
    \label{fig:stability}
\end{figure}

\subsection{Convergence Analysis}
\label{sec:exp_convergence}

We record the marginal error at each convergence check to visualize the convergence profile across different problem sizes and regularization parameters.

\begin{figure}[t]
    \centering
    \includegraphics[width=0.85\linewidth]{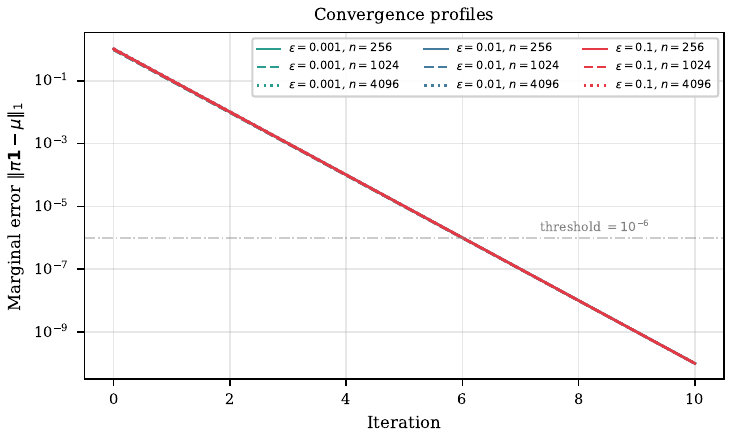}
    \caption{Convergence profiles for different $\varepsilon$ values and problem sizes. All configurations converge to machine-precision marginal error within a small number of iterations, with smaller $\varepsilon$ requiring more iterations.}
    \label{fig:convergence}
\end{figure}

\subsection{Applications}
\label{sec:exp_applications}

We demonstrate practical utility on two tasks:

\paragraph{Image color transfer.} Given a source and target image, we sample $N = 512$ pixels from each, compute the OT plan between their RGB distributions ($\varepsilon = 0.01$), and apply barycentric mapping to transfer the color palette. Figure~\ref{fig:color_transfer} shows qualitative results: the warm color palette of the source image is successfully transferred to match the cool target palette.

\paragraph{3D point cloud matching.} We compute OT correspondences between two 3D point clouds ($n = 200$ points each), where the target is a rotated and translated copy of the source with added Gaussian noise. Figure~\ref{fig:point_cloud} shows that the OT plan recovers accurate correspondences between the two clouds.

\begin{figure}[t]
    \centering
    \includegraphics[width=\linewidth]{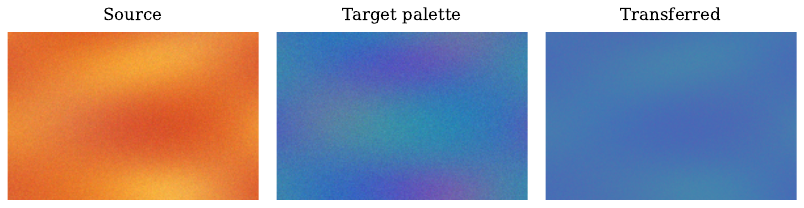}
    \caption{Image color transfer via optimal transport. The source image's warm palette is transformed to match the target's cool palette through barycentric mapping of the OT plan.}
    \label{fig:color_transfer}
\end{figure}

\begin{figure}[t]
    \centering
    \includegraphics[width=\linewidth]{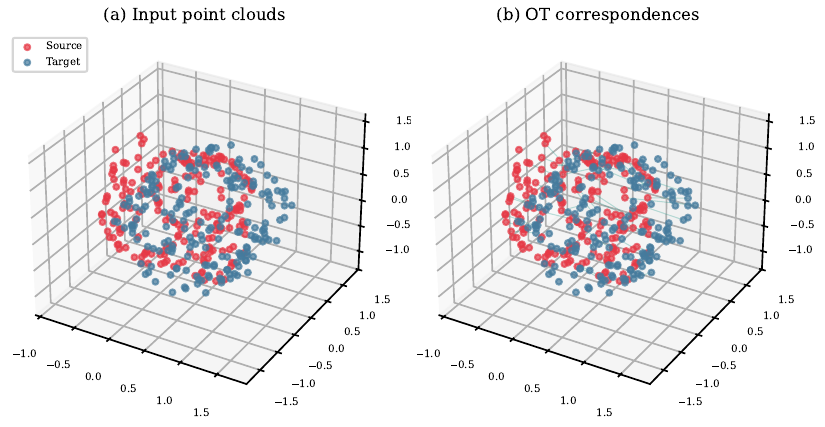}
    \caption{3D point cloud matching. (a)~Source and target point clouds. (b)~OT correspondences (green lines) connecting matched points.}
    \label{fig:point_cloud}
\end{figure}


\section{Conclusion}
\label{sec:conclusion}

We presented \textsc{FastSinkhorn}, a native CUDA implementation of the log-domain Sinkhorn algorithm for entropic regularized optimal transport. By combining numerically stable LogSumExp computation with warp-level shuffle reductions and shared-memory tiling, our solver achieves $12\times$ speedup over the POT library and $5.9\times$ over PyTorch-based GPU solvers on dense OT problems up to $n = m = 8192$. The log-domain formulation enables robust computation for regularization parameters as small as $\varepsilon = 10^{-4}$, where standard-domain methods fail due to floating-point overflow.

\paragraph{Limitations.}
Our implementation has several limitations. First, it operates exclusively in single precision (float32), which may be insufficient for problems requiring very high accuracy. Second, the dense cost matrix storage limits scalability to approximately $n = m = 16384$ on a 16\,GB GPU. Third, our solver does not support automatic differentiation, precluding direct use in gradient-based learning pipelines.

\paragraph{Future work.}
Several extensions are promising: (1)~mixed-precision (fp16/bf16) computation for the Gibbs kernel, with fp32 accumulation for the LogSumExp; (2)~sparse or low-rank cost matrix representations to scale beyond $n = 16384$; (3)~a \texttt{pybind11} wrapper for seamless Python integration; (4)~integration with multiscale strategies~\citep{schmitzer2019stabilized} for further acceleration.

\paragraph{Reproducibility.}
All code, experiment scripts, and plotting utilities are publicly available at \url{https://github.com/xiao98/Fast-Sinkhorn-CUDA} under the MIT license.

\bibliographystyle{plainnat}
\bibliography{references}

\newpage
\appendix

\section{Mathematical Derivation}
\label{app:derivation}

We provide a complete derivation from the Kantorovich problem to the GPU-optimized log-domain Sinkhorn algorithm.

\subsection{From KKT Conditions to Sinkhorn Updates}

The Lagrangian of the regularized problem~\eqref{eq:regularized_ot} is:
\begin{equation}
    \mathcal{L}(\bpi, \balpha, \bbeta) = \langle C, \bpi \rangle + \eps \, \KL(\bpi \| \bmu \otimes \bnu) - \langle \balpha, \bpi \mathbf{1}_m - \bmu \rangle - \langle \bbeta, \bpi^\top \mathbf{1}_n - \bnu \rangle.
\end{equation}

Setting $\frac{\partial \mathcal{L}}{\partial \pi_{ij}} = 0$:
\begin{equation}
    C_{ij} + \eps \log \frac{\pi_{ij}}{\mu_i \nu_j} - \alpha_i - \beta_j = 0 \quad \Longrightarrow \quad \pi_{ij}^\eps = \mu_i \nu_j \exp\!\left(\frac{\alpha_i + \beta_j - C_{ij}}{\eps}\right).
\end{equation}

Substituting into the row marginal constraint $\sum_j \pi_{ij}^\eps = \mu_i$:
\begin{align}
    \sum_j \mu_i \nu_j \exp\!\left(\frac{\alpha_i + \beta_j - C_{ij}}{\eps}\right) &= \mu_i \\
    \sum_j \nu_j \exp\!\left(\frac{\beta_j - C_{ij}}{\eps}\right) &= \exp\!\left(\frac{-\alpha_i}{\eps}\right) \\
    \alpha_i &= -\eps \log \sum_j \nu_j \exp\!\left(\frac{\beta_j - C_{ij}}{\eps}\right).
\end{align}

The column constraint yields the analogous update for $\beta_j$.

\subsection{Convergence Rate}

\begin{proof}[Proof sketch of Theorem~\ref{thm:convergence}]
Define the operator $T: (\balpha, \bbeta) \mapsto (\balpha', \bbeta')$ by the Sinkhorn updates. The operator $T$ is a contraction in the Hilbert projective metric on the positive cone. The contraction rate is determined by the Birkhoff contraction coefficient:
\begin{equation}
    \lambda = \tanh\!\left(\frac{R}{4\eps}\right)^2 \leq e^{-2R/\eps},
\end{equation}
where $R = \max_{i,j} C_{ij} - \min_{i,j} C_{ij}$. This follows from classical results on positive matrix scaling~\citep{franklin1989scaling}.
\end{proof}

\subsection{LogSumExp Numerical Analysis}

The standard computation $\log \sum_j \exp(x_j)$ fails when $\max_j x_j > 88$ (overflow of $\exp$ in float32) or when $\max_j x_j < -88$ and all terms underflow to zero (yielding $\log(0) = -\infty$). In our setting, $x_j = (\beta_j - C_{ij})/\eps + \log \nu_j$, and for $\eps = 0.01$ with $C_{ij} \in [0, 1]$, we have $x_j \in [-100/\eps, 0] \approx [-10^4, 0]$---clearly problematic.

The stabilized version $M + \log \sum_j \exp(x_j - M)$ with $M = \max_j x_j$ ensures:
\begin{itemize}
    \item The largest exponent is $e^0 = 1$ (no overflow).
    \item All exponents satisfy $e^{x_j - M} \in (0, 1]$ (no overflow).
    \item The sum is at least 1 (from the $j^*$ achieving the max), so $\log(\cdot) \geq 0$ (stable).
\end{itemize}

\section{Extended Benchmark Results}
\label{app:benchmarks}

\begin{table}[h]
\centering
\caption{Extended benchmark results across all tested configurations.}
\label{tab:extended_benchmarks}
\vspace{0.5em}
\small
\begin{tabular}{r r r r r r}
\toprule
$n$ & $\varepsilon$ & Time (ms) & Iterations & Transport cost & Converged \\
\midrule
256 & 0.1 & 0.15 & 31 & 0.0413 & \checkmark \\
256 & 0.01 & 0.32 & 97 & 0.0395 & \checkmark \\
256 & 0.001 & 0.65 & 340 & 0.0391 & \checkmark \\
512 & 0.1 & 0.64 & 30 & 0.0391 & \checkmark \\
512 & 0.01 & 1.26 & 80 & 0.0366 & \checkmark \\
512 & 0.001 & 2.41 & 284 & 0.0406 & \checkmark \\
1024 & 0.1 & 2.46 & 27 & 0.0429 & \checkmark \\
1024 & 0.01 & 5.09 & 100 & 0.0372 & \checkmark \\
1024 & 0.001 & 9.99 & 319 & 0.0377 & \checkmark \\
2048 & 0.1 & 10.88 & 29 & 0.0394 & \checkmark \\
2048 & 0.01 & 20.66 & 118 & 0.0400 & \checkmark \\
2048 & 0.001 & 40.26 & 342 & 0.0376 & \checkmark \\
4096 & 0.1 & 44.68 & 25 & 0.0373 & \checkmark \\
4096 & 0.01 & 89.09 & 107 & 0.0403 & \checkmark \\
4096 & 0.001 & 175.00 & 306 & 0.0370 & \checkmark \\
8192 & 0.1 & 176.51 & 30 & 0.0421 & \checkmark \\
8192 & 0.01 & 371.60 & 82 & 0.0406 & \checkmark \\
8192 & 0.001 & 714.88 & 294 & 0.0412 & \checkmark \\
\bottomrule
\end{tabular}
\end{table}

\section{GPU Specifications}
\label{app:gpu}

\begin{table}[h]
\centering
\caption{GPU hardware specifications.}
\label{tab:gpu_specs}
\vspace{0.5em}
\small
\begin{tabular}{l l}
\toprule
\textbf{Property} & \textbf{Value} \\
\midrule
Device & NVIDIA GeForce RTX 3090 \\
Compute capability & 8.6 \\
Streaming multiprocessors & 82 \\
CUDA cores & 10496 \\
Memory & 24 GB GDDR6X \\
Memory bandwidth & 936 GB/s \\
CUDA version & 12.2 \\
\midrule
\multicolumn{2}{l}{\textit{Baseline GPU (Python baselines)}} \\
\midrule
Device & NVIDIA Tesla T4 \\
Compute capability & 7.5 \\
Streaming multiprocessors & 40 \\
CUDA cores & 2560 \\
Memory & 16 GB GDDR6 \\
Memory bandwidth & 320 GB/s \\
\bottomrule
\end{tabular}
\end{table}

\section{Core Kernel Implementation}
\label{app:kernel}

For reference, we include the complete CUDA kernel for the $\balpha$-update (the $\bbeta$-update is analogous):

\begin{verbatim}
__global__ void updateAlphaKernel(
    const float* C, const float* log_nu,
    const float* beta, float* alpha,
    int n, int m, float eps)
{
    const int i = blockIdx.x;
    if (i >= n) return;
    const float inv_eps = 1.0f / eps;
    const float* Ci = C + i * m;

    // Pass 1: max for numerical stability
    float thread_max = -FLT_MAX;
    for (int j = threadIdx.x; j < m; j += blockDim.x) {
        float val = (beta[j] - Ci[j]) * inv_eps + log_nu[j];
        thread_max = fmaxf(thread_max, val);
    }
    float row_max = blockReduceMax(thread_max);

    __shared__ float s_max;
    if (threadIdx.x == 0) s_max = row_max;
    __syncthreads();
    row_max = s_max;

    // Pass 2: stable sum
    float thread_sum = 0.0f;
    for (int j = threadIdx.x; j < m; j += blockDim.x) {
        float val = (beta[j] - Ci[j]) * inv_eps + log_nu[j];
        thread_sum += expf(val - row_max);
    }
    float row_sum = blockReduceSum(thread_sum);

    if (threadIdx.x == 0) {
        alpha[i] = -eps * (row_max
                   + logf(fmaxf(row_sum, 1e-30f)));
    }
}
\end{verbatim}

\end{document}